\begin{document}
\title{PDD-SHAP: Fast Approximations for Shapley Values using Functional Decomposition}
\titlerunning{PDD-SHAP}
%
\author{Arne Gevaert\inst{1}\orcidID{0000-0003-4130-8151} \and
Yvan Saeys\inst{1}\orcidID{0000-0002-0415-1506}}
\authorrunning{A. Gevaert, Y. Saeys}
%
\institute{Department of Applied Mathematics and Statistics\\Ghent University, Ghent, Belgium}
\maketitle              
\begin{abstract}
Because of their strong theoretical properties, Shapley values have become very popular as a way to explain predictions made by black box models. Unfortuately, most existing techniques to compute Shapley values are computationally very expensive. We propose PDD-SHAP, an algorithm that uses an ANOVA-based functional decomposition model to approximate the black-box model being explained. This allows us to calculate Shapley values orders of magnitude faster than existing methods for large datasets, significantly reducing the amortized cost of computing Shapley values when many predictions need to be explained.

\keywords{Explainability  \and Shapley values \and Functional decomposition}
\end{abstract}
\section{Introduction}

Due to their theoretical foundations, Shapley values have gained significant popularity as an explanation method for black-box machine learning models in recent years \cite{shapley1997value,lundberg2018,Lundberg2017}. However, estimating Shapley values in practice is computationally very expensive, as the number of model evaluations needed to produce an exact explanation increases exponentially with the dimensionality of the data. This quickly becomes infeasible for high-dimensional data and/or computationally complex models.

Because of this problem, many techniques have been proposed to lower the cost of computing Shapley values. These techniques can roughly be divided into \textit{model-agnostic} and \textit{model-specific} techniques. Model-agnostic techniques make no assumptions about the black-box model being explained. These techniques rely on sampling subsets or permutations of features, and using those samples to estimate Shapley values. Examples include feature subset sampling \cite{strumbelj2014}, antithetic sampling \cite{mitchell} and KernelSHAP \cite{Lundberg2017}. In contrast, model-specific techniques exploit certain properties of the black-box model being explained to reduce the computational cost. For example, model-specific techniques have been proposed for ensembles of trees (TreeSHAP \cite{lundberg2019}) or neural network models (DeepSHAP \cite{Lundberg2017}). These techniques are often faster than the model-agnostic variants, but are less flexible.

In this work, we exploit properties of the functional ANOVA decomposition \cite{roosen1995} to produce a model-agnostic technique for approximating Shapley values with a very low \textit{amortized} cost: if many predictions need to be explained, the cost \textit{per explanation} decreases significantly. We do this by constructing a functional decomposition (ANOVA model), and training it to imitate the black box model being explained. Once the ANOVA model is trained, Shapley values can be estimated orders of magnitude faster than using existing model-agnostic approaches. We empirically show that the cost of training the surrogate model is compensated by the speedup in inference, even for relatively small amounts of explanations.

Recently, a related technique has been proposed that also trains a surrogate model to produce Shapley value explanations \cite{jethani2022}. However, this technique uses a black-box surrogate model to output Shapley values directly. This limits the possible uses of the resulting surrogate model. In contrast, our technique uses a functional decomposition model that retains information about the specific interactions between variables. This allows us not only to quickly compute Shapley values for a given feature, but also to view which other features each feature interacts with, an ability that is lost when using a black-box surrogate model \cite{lundberg2019a,sundararajan2020}.

\section{Shapley values}

We will first introduce some notation. We denote the black-box model as a function $f: \mathcal{X} \rightarrow \mathbb{R}$, where $\mathcal{X} \subseteq \mathbb{R}^d$. We abbreviate the set $\{1, \dots, d\}$ to $[d]$, and for a subset $u \subseteq [d]$ we write $-u := [d] \setminus u$. If $\mathbf{x} = (x_1, \dots, x_d) \in \mathcal{X}$ and $u \subseteq [d]$, we will write $\mathbf{x}_u := (x_j)_{j \in u}$, and analogously $\mathcal{X}_u := (\mathcal{X}_j)_{j \in u}$. We will also abbreviate $u \cup \{j\}$ to $u + j$, for any singleton set $\{j\}$. If $\mathbf{x}, \mathbf{y} \in \mathcal{X}$ and $u \subseteq [d]$, then $\mathbf{z} := \mathbf{x}_{u}:\mathbf{y}_{-u}$ is defined as $z_j := x_j$ for $j \in u$, and $z_j = y_j$ for $j \notin u$.

Originally introduced in the context of cooperative game theory, Shapley values \cite{shapley1997value} are a way of fairly distributing a \textit{payout} among participating \textit{players}. Let $\text{val}(u) \in \mathbb{R}$ be the payout of a subset of players $u \subseteq [d]$, with $\text{val}(\emptyset) = 0$. The Shapley value for player $j \in [d]$ is then defined as:
\begin{equation*}
\phi_j := \frac{1}{d} \sum_{u \subseteq -\{j\}} {d - 1 \choose |u|}^{-1}(val(u + j) - val(u))
\end{equation*}

The Shapley value is generally viewed as a ``fair'' way of attributing the payout to the players, because it adheres to the following properties \cite{shapley1997value,Lundberg2017}:
\begin{itemize}
\item \textbf{Efficiency:} $\sum_{j=1}^d \phi_j = \text{val}([d])$
\item \textbf{Symmetry:} If $\forall u \subseteq [d] \setminus \{i,j\}: \text{val}(u+j) = \text{val}(u+i)$, then $\phi_i = \phi_j$.
\item \textbf{Dummy:} If $\forall u \subseteq [d]: \text{val}(u + i) = \text{val}(u)$, then $\phi_i = 0$.
\item \textbf{Additivity:} If $\text{val}$ and $\text{val'}$ have Shapley values $\phi_j$ and $\phi'_j$ respectively, then the Shapley values for the game with value function $\text{val} + \text{val'}$ are $\phi_j + \phi'_j, \forall j \in [d]$.
\end{itemize}

\section{ANOVA and Partial Dependence Decomposition} 

The functional ANOVA is a decomposition of the form
\begin{equation*}
f(\mathbf{x}) = \sum_{u \subseteq [d]} f_u(\mathbf{x})
\end{equation*}
where each function $f_u$ depends only on the variables $x_j, j \in u$. The ANOVA decomposition is defined recursively as follows \cite{roosen1995}:
\begin{align}
f_\emptyset &:= \mu = \int f(\mathbf{x}) d\mathbf{x} \nonumber\\
f_u(\mathbf{x}) &:= \int_{\mathcal{X}_{-u}}\left( f(\mathbf{x}) - \sum_{v \subset u} f_v(\mathbf{x})\right) d\mathbf{x}_{-u} \nonumber\\
&= \sum_{v \subseteq u} (-1)^{|u| - |v|} \int_{\mathcal{X}_{-v}} f_v(\mathbf{x}) d\mathbf{x}_{-v}\nonumber
\end{align}
The ANOVA decomposition can also be viewed as decomposing the variance of the function $f$. If we denote the variance of $f$ and $f_u$ as:
\begin{align*}
\sigma^2 &:= \int_{\mathcal{X}_{-u}} (f(\mathbf{x}) - u)^2 d\mathbf{x}\\
\sigma^2_u &:= \left\{ \begin{array}{ll} \int f_u(\mathbf{x})^2 d\mathbf{x} &\qquad u \neq \emptyset \\ 0 &\qquad u = \emptyset \end{array} \right.
\end{align*}
then we have the following property:
\begin{equation*}
\sigma^2 = \sum_{|u| > 0} \sigma^2_u
\end{equation*}
This allows us to estimate the strength of each interaction $u$ using the variance of the corresponding function. For more information on the functional ANOVA decomposition, see \cite{roosen1995}.

The original ANOVA decomposition assumes that variables are independent and uniformly distributed: $x_j \sim \mathcal{U}[0,1], \forall j \in [d]$. This assumption is not critical, and the uniform distributions can be replaced with a general marginal distribution for each variable without loss of the decomposition of variance property \cite{hooker2004}: $x_j \sim X_j, \forall j \in [d]$. The decomposition then becomes:

\begin{align}
f_\emptyset &:= \mu = \int f(\mathbf{x}) d\mathbb{P}(\mathbf{x}) \nonumber\\
f_u(\mathbf{x}) &:= \int_{\mathcal{X}_{-u}}\left( f(\mathbf{x}) - \sum_{v \subset u} f_v(\mathbf{x})\right) d\mathbb{P}(\mathbf{x}_{-u}) \label{eq:anova_original}\\
&= \sum_{v \subseteq u} (-1)^{|u| - |v|} \int_{\mathcal{X}_{-v}} f_v(\mathbf{x}) d\mathbb{P}(\mathbf{x}_{-v}) \label{eq:anova_mobius}
\end{align}

where $\mathbb{P}(\mathbf{x}_u)$ denotes the marginal probability measure for $\mathcal{X}_u$. Note that this approach still assumes independence between variables.

The ANOVA decomposition with general marginal distributions corresponds to the Partial Dependence plot for univariate and bivariate effects \cite{friedman2001}. For this reason, we name this variant of the ANOVA decomposition the Partial Dependence Decomposition.

\section{Computing Shapley values from the Partial Dependence Decomposition}
\label{sec:computing-shapley-values-pdd}

\cite{owen2014} shows that, if we have a functional decomposition $f = \sum_u f_u$ and the value function is defined as:
\begin{equation*}
\text{val}(u) = \sum_{v \subseteq u} \sigma^2_v
\end{equation*}
then the Shapley values for this value function can be computed as follows:
\begin{equation*}
\phi_j = \sum_{\substack{u \subseteq [d] \\ j \in u}} \frac{\sigma^2_u}{|u|}
\end{equation*}
This specific implementation of Shapley values is termed \textit{Shapley Effects} \cite{song2016}.
The proof from \cite{owen2014} does not rely on any specific properties of the variance $\sigma^2_u$, and can therefore be used to prove the following, more general property of functional decompositions and Shapley values:

\begin{theorem}
	Let the value of a subset of variables be $\text{val}(u) := \sum_{v \subseteq u} g(f_v)$, where $\sum_{u \subseteq [d]} f_v = f$ and $g(f_u) \in \mathbb{R}, \forall u \subseteq [d]$. Then the Shapley value for variable $j$ is
	\begin{equation*}
	\phi_j = \sum_{\substack{u \subseteq [d]\\j \in u}} \frac{g(f_u)}{|u|}
	\end{equation*}
	\label{thm:decomp-shapley-property}
\end{theorem}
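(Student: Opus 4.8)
The plan is to plug the definition $\text{val}(u) := \sum_{v \subseteq u} g(f_v)$ directly into the Shapley value formula and then swap the order of summation, isolating the coefficient that multiplies each term $g(f_u)$. Concretely, start from
\begin{equation*}
\phi_j = \frac{1}{d} \sum_{w \subseteq -\{j\}} {d-1 \choose |w|}^{-1} \bigl(\text{val}(w+j) - \text{val}(w)\bigr),
\end{equation*}
and observe that $\text{val}(w+j) - \text{val}(w) = \sum_{v \subseteq w+j} g(f_v) - \sum_{v \subseteq w} g(f_v) = \sum_{v \subseteq w,\ j \notin v} g(f_{v+j})$, i.e. the difference picks out exactly those subsets of $w+j$ that contain $j$. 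Reindexing by $u := v + j$, this is $\sum_{u \subseteq w+j,\ j \in u} g(f_u)$. So $\phi_j$ becomes a double sum over $w \subseteq -\{j\}$ and over $u \ni j$ with $u \subseteq w+j$ (equivalently $u - j \subseteq w$), of $\tfrac{1}{d}{d-1\choose|w|}^{-1} g(f_u)$.

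Next I would interchange the two sums so that $u$ (ranging over subsets containing $j$) is on the outside, and collect the coefficient of $g(f_u)$:
\begin{equation*}
\phi_j = \sum_{\substack{u \subseteq [d]\\ j \in u}} g(f_u) \cdot \frac{1}{d} \sum_{\substack{w \subseteq -\{j\}\\ u - j \subseteq w}} {d-1 \choose |w|}^{-1}.
\end{equation*}
The inner sum runs over all $w$ with $u-j \subseteq w \subseteq [d] \setminus \{j\}$. Writing $|u| = k$ (so $|u - j| = k-1$), such a $w$ is determined by choosing how many of the remaining $d - k$ elements of $[d] \setminus u$ to include; if we include $\ell$ of them then $|w| = k - 1 + \ell$, and there are ${d-k \choose \ell}$ such choices. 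Hence the coefficient is
\begin{equation*}
\frac{1}{d} \sum_{\ell = 0}^{d-k} {d-k \choose \ell} {d-1 \choose k-1+\ell}^{-1}.
\end{equation*}

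The main obstacle is the combinatorial identity that this coefficient equals $\tfrac{1}{k} = \tfrac{1}{|u|}$, i.e.
\begin{equation*}
\frac{1}{d} \sum_{\ell = 0}^{d-k} {d-k \choose \ell} {d-1 \choose k-1+\ell}^{-1} = \frac{1}{k}.
\end{equation*}
I would establish this using the Beta-function representation ${d-1 \choose k-1+\ell}^{-1} = (k+\ell) {d \choose k+\ell}^{-1} / d \cdot \text{(something)}$ — more cleanly, use $\binom{n}{m}^{-1} = (n+1)\int_0^1 t^m (1-t)^{n-m}\, dt$ with $n = d-1$, $m = k-1+\ell$, which turns the sum over $\ell$ into $d \int_0^1 t^{k-1}(1-t)^{d-k} \sum_\ell \binom{d-k}{\ell} t^\ell (1-t)^{d-k-\ell}\, dt$; the inner sum is just a binomial expansion equal to... wait, it is $(t + (1-t))^{d-k} = 1$ only if the exponents match, so actually the sum collapses to $d\int_0^1 t^{k-1}\,dt = d/k$, giving coefficient $1/k$ after the $1/d$ factor. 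So the Beta-integral trick dispatches the identity in one line, and the remainder is bookkeeping. Since the argument used only linearity of $\text{val}$ in the $g(f_v)$ terms and the summation structure — never any property of variance — this is exactly the generalization of the argument in \cite{owen2014}, as claimed.
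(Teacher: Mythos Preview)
Your proof is correct, but it takes a genuinely different route from the paper. The paper decomposes the value function as $\text{val}(u) = \sum_{v \neq \emptyset} \text{val}^{(v)}(u)$ with $\text{val}^{(v)}(u) = g(f_v)\,\mathbbm{1}_{v \subseteq u}$ (these are scaled unanimity games), and then reads off the Shapley value of each piece purely from the axioms: Dummy gives $\phi_j^{(v)}=0$ for $j\notin v$, Symmetry forces all $\phi_j^{(v)}$ with $j\in v$ to be equal, Efficiency fixes their common value to $g(f_v)/|v|$, and Additivity glues the pieces together. No combinatorics is needed. Your approach instead plugs the value function into the explicit Shapley formula, swaps sums, and reduces everything to the identity $\tfrac{1}{d}\sum_{\ell=0}^{d-k}\binom{d-k}{\ell}\binom{d-1}{k-1+\ell}^{-1}=\tfrac{1}{k}$, which you dispatch with the Beta-integral representation $\binom{d-1}{m}^{-1}=d\int_0^1 t^m(1-t)^{d-1-m}\,dt$ and the binomial theorem (your written expression has a stray $(1-t)^{d-k}$ factor that you correctly drop when you note the exponents must match for the binomial sum to collapse). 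The paper's argument is shorter and more conceptual, and makes transparent that the result depends only on the Shapley axioms; yours is self-contained from the formula and does not invoke the axioms at all, at the cost of a combinatorial lemma.
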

\begin{proof}
	We can write $\text{val}(u) = \sum_{v \subseteq [d], v \neq \emptyset} \text{val}^{(v)}(u)$, where $\text{val}^{(v)}(u) := g(f_v) \mathbbm{1}_{v \subseteq u}$ and $\mathbbm{1}_{v \subseteq u} = 1$ if $v \subseteq u$ and $0$ otherwise. Denote $\phi_j^{(v)}$ as the Shapley value for variable $j$ and value function $\text{val}^{(v)}$. Because of the Efficiency property, we know that $\sum_{j \in [d]} \phi^{(v)}_j = \text{val}^{(v)}([d]) = g(f_v)$. To compute $\phi_j^{(v)}$, we consider two cases:
	\begin{itemize}
		\item $j \notin v$: in this case, we have $\forall u \subseteq [d]: \text{val}^{(v)}(u + j) = g(f_v) \mathbbm{1}_{v \subseteq u + j} = g(f_v) \mathbbm{1}_{v \subseteq u} = \text{val}^{(v)}(u)$. Due to the Dummy property of Shapley values, we get $\forall j \notin v: \phi^{(v)}_j = 0$.
		\item $j \in v$: consider $i,j \in v, i \neq j, u \subseteq [d] \setminus \{i,j\}$. Then $\text{val}^{(v)}(u + i) = \text{val}^{(v)}(u + j) = 0$. Due to the Symmetry property, we know that $\forall i,j \in v: \phi_i = \phi_j$. Because $\sum_{j \in [d]} \phi^{(v)}_j = g(f_v)$ and $\forall j \notin v: \phi^{(v)}_j = 0$, we can conclude that $\forall j \in v: \phi_j = \frac{g(f_v)}{|v|}$.
	\end{itemize}
	Because of the Additivity property, we now have:
	\begin{equation*}
	\phi_j = \sum_{v \subseteq [d], v \neq \emptyset} \phi_j^{(v)} = \sum_{v \subseteq [d], j \in v} \frac{\sigma^2_v}{|v|}
	\end{equation*} $\qed$
\end{proof}

Note that the theorem holds for \textit{any} decomposition of the function $f$, not just the ANOVA decomposition.

We can now use this property to compute Shapley values from the Partial Dependence Decomposition. We define the value function for an explanation for an input point $\mathbf{x}$ as follows:
\begin{equation*}
\text{val}_\mathbf{x}(u) := \mathbb{E}_{\mathbf{z} \sim \mathcal{D}}[f(\mathbf{x}_u:\mathbf{z}_{-u})] - \mathbb{E}_{\mathbf{z} \sim \mathcal{D}}[f(\mathbf{z})]
\end{equation*}
where $\mathcal{D}$ is the input distribution. Using (\ref{eq:anova_mobius}), we can rewrite this as:
\begin{align*}
\text{val}_\mathbf{x}(u) &= \int_{\mathcal{X}_{-u}} f(\mathbf{x}_u:\mathbf{z}_{-u})d\mathbb{P}(\mathbf{z}_{-u}) - \int_\mathcal{X} f(\mathbf{z}) d\mathbb{P}(\mathbf{z})\\
&= \sum_{v \subseteq u} f_v(\mathbf{x}) - f_\emptyset
\end{align*}
If we now define $g(f_u)(\mathbf{x})$ as follows:
\begin{equation*}
g(f_u)(\mathbf{x}) = \left\{\begin{array}{ll}
0 & \mbox{if } u = \emptyset\\
f_u(\mathbf{x}) & \mbox{if } u \neq \emptyset
\end{array}\right.
\end{equation*}
then from Theorem \ref{thm:decomp-shapley-property}, we get:
\begin{align}
\text{val}_{\mathbf{x}}(u) &= \sum_{v \subseteq u} g(f_v)(\mathbf{x}) \nonumber\\
\phi^\mathbf{x}_j &= \sum_{\substack{u \subseteq [d]\\j \in u}} \frac{f_u(\mathbf{x})}{|u|} \label{eq:pdd-shapley}
\end{align}
where $\phi_j^\mathbf{x}$ is the Shapley value for variable $j$ and input sample $\mathbf{x}$.
\section{PDD-SHAP}
In Section \ref{sec:computing-shapley-values-pdd}, we showed that if we have access to a Partial Dependence Decomposition of the black-box function $f$, we can easily compute Shapley values for each variable $j$ without having to sample the input space or evaluate the original black-box model. The approach in PDD-SHAP is now as follows: we explicitly train a model $\hat{f}_u$ to approximate each $f_u$ of the Partial Dependence Decomposition, and then use that model to compute Shapley values using (\ref{eq:pdd-shapley}). We will denote $\hat{f} := \sum_{u \subseteq [d]} \hat{f}_u$ as the \textit{PDD surrogate model}.

A problem with this approach is given by the number of components $f_u$ that need to be estimated: this is equal to the number of subsets of $[d]$, which grows exponentially with $d$. However, we can mitigate this problem by observing that the higher-order terms in (\ref{eq:pdd-shapley}) have a lower impact on the result, as they are divided by their cardinality $|u|$. Also, we can reasonably assume that, in most real-world datasets, interactions between variables are generally of relatively low order \cite{hooker2004,owen2003}. From these observations, we can approximate the true PDD surrogate model using only the terms $f_u$ with $|u| \leq k$, for some $k \in \mathbb{N}$.

We can now summarize PDD-SHAP as follows:
\begin{itemize}
	\item For each subset $u \subseteq [d]$ with $|u| < k$:
	\begin{itemize}
		\item Approximate the partial dependence of $f$ on $u$ in a set of points $\mathbf{x}_u$ using (\ref{eq:anova_original})
		\item Train a model $\hat{f}(u)$ to approximate the partial dependence of $f$ on $u$
	\end{itemize}
	\item Compute Shapley values using (\ref{eq:pdd-shapley}).
\end{itemize}
The complete algorithm is given in Algorithm \ref{alg:pdd-shapley}. The background sample $X_{bg}$ is a representative subset of the original training data used to estimate the partial dependence. This background sample is also used in existing techniques for approximating Shapley values, and can be seen as the ``training data'' for the PDD surrogate model. We approximate the integral in (\ref{eq:anova_original}) using an empirical average on this background sample.

\begin{algorithm}
	\caption{PDD-SHAP training}\label{alg:pdd-shapley}
	\hspace*{\algorithmicindent} \textbf{Input:} $k \in \mathbb{N}$, background sample $X_{bg}$ with $|X_{bg}| = N$\\
	\hspace*{\algorithmicindent} \textbf{Output:} PDD surrogate model $\hat{f} := \{\hat{f}_u:u \subseteq [d], |u| \leq k\}$
	\begin{algorithmic}[1]
		\State $f_\emptyset \gets \frac{1}{N}\sum_{\mathbf{x} \in X_{bg}}f(\mathbf{x})$
		\For{$i = 1, \dots, k$}
		\For{$u \subseteq [d], |u| = i$}
		\For{$\mathbf{x}^j \in X_{bg}$}
		\State $y^j \gets \frac{1}{N}\sum_{z \in X_{bg}} f(\mathbf{x}_u:\mathbf{z}_{-u}) - \sum_{v \subset u} \hat{f}_v(\mathbf{x})$
		\EndFor
		\State Train model $\hat{f}_u$ on $\{(\mathbf{x}^j, y^j): \mathbf{x}^j \in X_{bg}\}$
		\EndFor
		\EndFor
		\State \Return $\{\hat{f}_u: u \subseteq [d], |u| \leq k\}$
	\end{algorithmic}
\end{algorithm}

\section{Experiments}
We tested our approach on the adult \cite{kohavi1996}, superconduct \cite{hamidieh2018}, UCI German credit \cite{Dua:2019}, California housing \cite{kelleypace1997} and UCI Abalone datasets \cite{Dua:2019}, using a background sample of 100 instances, and using a regression tree to model each $\hat{f}_u$\footnote{Implementation available at \url{https://github.com/arnegevaert/pdp-shapley}}. For the adult, superconduct, credit and California housing datasets, we compare the performance for $k \in \{1, \dots, 4\}$. For the superconduct dataset, we only compare the performance for $k \in \{1,2\}$, as this dataset has a significantly larger number of features than the others. An overview of the datasets is given in Table \ref{tab-datasets}.

To evaluate our technique, we train a Gradient Boosting model on each dataset and compare the explanations given by PDD-SHAP for 1000 test samples to 3 existing model-agnostic approaches for computing Shapley values: feature subset sampling \cite{strumbeljerik2010}, antithetic sampling \cite{mitchell} and KernelSHAP \cite{Lundberg2017} (note that for the German credit and Abalone datasets, we used the full test set, as they contained fewer than 1000 samples). These approaches are all implemented in the \texttt{shap} package\footnote{\url{https://github.com/slundberg/shap}} (\texttt{SamplingExplainer}, \texttt{PermutationExplainer} and \texttt{KernelExplainer}, respectively).

Figures \ref{fig:acc-r2} and \ref{fig:acc-corr} resp. show the $R^2$ and Spearman rank correlations between the different implementations of PDD-SHAP and the subset sampling approach. We see that for most datasets, the $R^2$ value is close to 0.9 or larger for $k=2$, indicating that the model contains almost no interactions between more than 2 features. On the abalone dataset, we see the $R^2$ still increases for $k > 2$, indicating that this dataset contains higher-order features. In general, the Spearman rank correlations are higher than the $R^2$ values. This indicates that, although the absolute magnitude of each Shapley value may not be entirely accurate, the \textit{ranking} of the features according to their importance is accurately estimated in most cases.

Table \ref{tab-speed} shows the runtime required for each method to generate Shapley values on 1000 instances. We see that the inference time for PDD-SHAP is orders of magnitude lower than the runtime for the existing methods. In many cases even the runtime for training the surrogate model and inference combined is still significantly lower than the runtime for the existing methods.

\begin{table}
	\centering
	\caption{Overview of the datasets.}\label{tab-datasets}
	\begin{tabular}{|l|l|l|l|}
		\hline
		Name &  Type & \#features & \#samples\\
		\hline
		Adult & Classification & 15 & 48842\\
		Credit & Classification & 20 & 1000\\
		Superconduct & Regression & 82 & 21263\\
		Housing & Regression & 10 & 20640\\
		Abalone & Classification & 9 & 4177\\
		\hline
	\end{tabular}
\end{table}

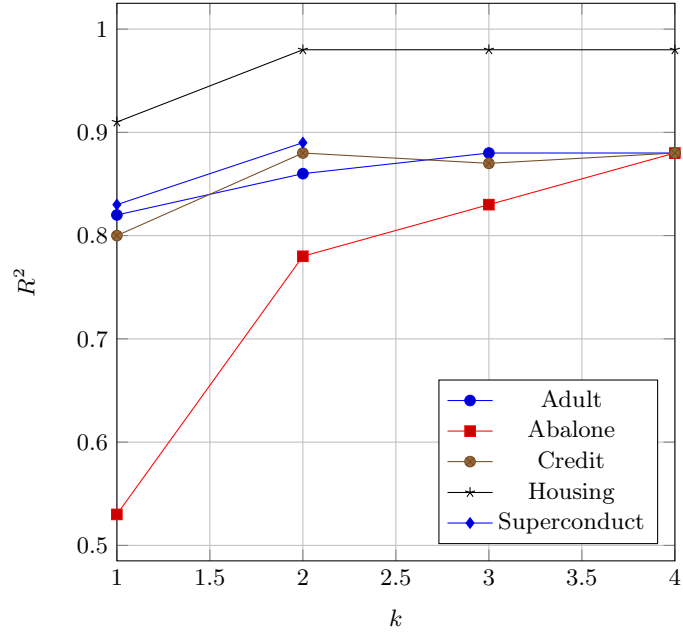
\begin{figure}
	\centering
	\begin{tikzpicture}
	\begin{axis}[
	height=9cm,
	width=9cm,
	grid=major,
	xlabel=$k$,
	ylabel=$R^2$,
	xmin=1,xmax=4,
	legend pos=south east
	]
	
	\addplot coordinates {
		(1,0.82)
		(2,0.86)
		(3,0.88)
		(4,0.88)
	};
	\addlegendentry{Adult}
	
	\addplot coordinates {
		(1,0.53)
		(2,0.78)
		(3,0.83)
		(4,0.88)
	};
	\addlegendentry{Abalone}
	
	\addplot coordinates {
		(1,0.80)
		(2,0.88)
		(3,0.87)
		(4,0.88)
	};
	\addlegendentry{Credit}
	
	\addplot coordinates {
		(1,0.91)
		(2,0.98)
		(3,0.98)
		(4,0.98)
	};
	\addlegendentry{Housing}
	
	\addplot coordinates {
		(1,0.83)
		(2,0.89)
	};
	\addlegendentry{Superconduct}
	\end{axis}
	\end{tikzpicture}
	\caption{Accuracy of PDD-SHAP in terms of $R^2$. Y-axis shows $R^2$ value between PDD-SHAP and feature subset sampling, X-axis shows the value for $k$ (the maximal order of interaction in PDD-SHAP).}
	\label{fig:acc-r2}
\end{figure}

\begin{figure}
	\centering
	\begin{tikzpicture}
	\begin{axis}[
	height=9cm,
	width=9cm,
	grid=major,
	xlabel=$k$,
	ylabel=$\rho$,
	xmin=1,xmax=4,
	legend pos=south east
	]
	
	\addplot coordinates {
		(1,0.93)
		(2,0.93)
		(3,0.93)
		(4,0.93)
	};
	\addlegendentry{Adult}
	
	\addplot coordinates {
		(1,0.70)
		(2,0.81)
		(3,0.86)
		(4,0.87)
	};
	\addlegendentry{Abalone}
	
	\addplot coordinates {
		(1,0.91)
		(2,0.94)
		(3,0.93)
		(4,0.94)
	};
	\addlegendentry{Credit}
	
	\addplot coordinates {
		(1,0.96)
		(2,0.97)
		(3,0.97)
		(4,0.97)
	};
	\addlegendentry{Housing}
	
	\addplot coordinates {
		(1,0.81)
		(2,0.84)
	};
	\addlegendentry{Superconduct}
	\end{axis}
	\end{tikzpicture}
	\caption{Accuracy of PDD-SHAP in terms of Spearman rank correlation. Y-axis shows the spearman correlation $\rho$ between PDD-SHAP and feature subset sampling, X-axis shows the value for $k$ (the maximal order of interaction in PDD-SHAP).}
	\label{fig:acc-corr}
\end{figure}
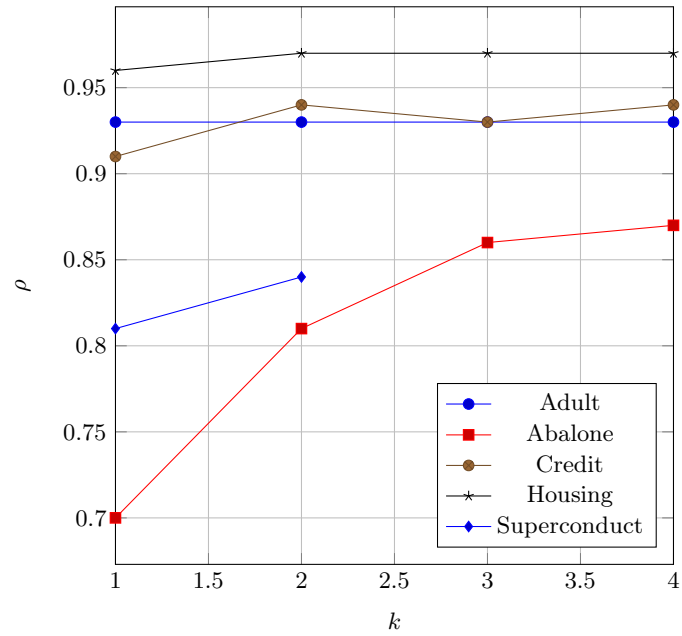

\begin{table}
	\caption{Runtime for PDD-SHAP vs. subset sampling, antithetic sampling and KernelSHAP for 1000 explanations, with a background dataset of 100 intances. The rightmost columns correspond to PDD-SHAP for varying values of $k$. Times are given as (train)+(inference). Results are indicated in bold where the sum of training and inference time for PDD-SHAP is lower than the runtime of all 3 alternatives.}\label{tab-speed}
	\begin{center}
	\begin{tabular}{|l|c|c|c||c|c|c|c|}
		\hline
		\multirow{2}{*}{Dataset} & \multirow{2}{*}{\shortstack{Subset\\sampling}} & \multirow{2}{*}{\shortstack{Antithetic\\sampling}} & \multirow{2}{*}{\shortstack{Kernel-\\SHAP}} & \multicolumn{4}{c|}{PDD-SHAP (train time + inference time)} \\
		\cline{5-8}
		& & & & $k=1$ & $k=2$ & $k=3$ & $k=4$ \\
		
		\hline
		Adult & 103.90 & 67.44 & 1060.78 & \textbf{0.85+0.01} & \textbf{6.15+0.02} & \textbf{27.33+0.10} & 84.29+0.28 \\
		Credit &  112.16 & 73.09 & 1125.11 & \textbf{1.31+0.01} & \textbf{12.42+0.04} & 77.96+0.22 & 370.31+1.06 \\
		Superconduct & 308.30 & 97.42 & 2141.26 & \textbf{2.98+0.01} & 120.09+0.18 & N/A & N/A \\
		Housing & 68.05 & 89.25 & 239.31 & \textbf{0.43+0.01} & \textbf{1.64+0.01} & \textbf{4.14+0.02} & \textbf{8.03+0.03} \\
		Abalone & 108.66 & 230.72 & 169.25 & \textbf{1.08+0.01} & \textbf{4.63+0.02} & 11.61+0.03 & 18.80+0.05 \\
		\hline
	\end{tabular}
	\end{center}
\end{table}

\section{Conclusion and future work}
In this work, we introduce PDD-SHAP, a model-agnostic approach to approximating Shapley value explanations using a surrogate functional decomposition model. We test our approach on a selection of datasets, and show that the approximations are accurate, while the amortized cost of computation is orders of magnitude lower than other model-agnostic alternatives. While these results are already promising, there are still many possibilities to improve the speed and accuracy of the algorithm. Some approaches that we intend to explore include a more intelligent selection of higher-order effects in the ANOVA model, Monte Carlo integration techniques for the estimation of partial dependence, and more efficient background distribution sampling. 

\subsubsection{Acknowledgements} The research leading to these results has received funding from the Flemish Government under the ``Onderzoeksprogramma Artifici\"ele Intelligentie (AI) Vlaanderen'' programme, and from the BOF project 01D13919.

%
%
%
\bibliographystyle{splncs04}
\bibliography{bibliography}
\end{document}